\setlist[itemize]{leftmargin=*}
\setlist[enumerate]{leftmargin=*}
\newtheorem{example}{Example}[section]
\title{Data-Driven Methods for Balancing Fairness and Efficiency in Ride-Pooling}
\newcommand{\Eqref}[1]{(\ref{#1})}
\author{
Naveen Raman$^1$\and
Sanket Shah$^2$\and
John P.\ Dickerson$^{1}$\\
\affiliations
$^1$University of Maryland\\
$^2$Harvard University\\
\emails
nraman1@umd.edu,
sanketshah@g.harvard.edu,
john@cs.umd.edu
}
\begin{document}
	\title{Data-Driven Methods for Balancing Fairness and Efficiency in Ride-Pooling}
	\maketitle
	\begin{abstract}
Rideshare and ride-pooling platforms use artificial intelligence-based matching algorithms to pair riders and drivers.
However, these platforms can induce inequality either through an unequal income distribution or disparate treatment of riders. 
We investigate two methods to reduce forms of inequality in ride-pooling platforms: (1) incorporating fairness constraints into the objective function and (2) redistributing income to drivers to reduce income fluctuation and inequality. 
To evaluate our solutions, we use the New York City taxi data set. 
For the first method, we find that optimizing for driver-side fairness outperforms state-of-the-art models on the number of riders serviced, both in the worst-off neighborhood and overall, showing that optimizing for fairness can assist profitability in certain circumstances. 
For the second method, we explore income redistribution as a way to combat income inequality by having drivers keep an $r$ fraction of their income, and contributing the rest to a redistribution pool.
For certain values of $r$, most drivers earn near their Shapley value, while still incentivizing drivers to maximize value, thereby avoiding the free-rider problem and reducing income variability.
The first method can be extended to many definitions of fairness and the second method provably improves fairness without affecting profitability. 
% Moreover, we can combine the two methods to improve both types of fairness.
\end{abstract}

	\section{Introduction}\label{sec:intro}
Ride-pooling platforms, such as UberPool and Lyft-Line, manage independent drivers who can service multiple riders concurrently. 
To match riders and drivers, they use artificial-intelligence-based matching algorithms~\cite{turakhia12engineering}. 
While matching algorithms typically aim to maximize income, they can inadvertently have negative consequences on fairness, such as a gender wage gap~\cite{cook2018gender} or rider discrimination based on race~\cite{Brown18:Ridehail}. 

While the nascent literature on fairness in ride-pooling is limited, prior research has looked at maximizing the minimum utility across riders and drivers by approximating the problem as an instance of the bipartite matching problem~\cite{lesmana2019balancing}.
There have also been past papers that have looked into the sub-problem of fairness in \textit{rideshare}, where drivers can only service one rider request at a time (such as UberX), which simplifies the matching problem. 
Past work has used a bipartite matching problem framework to prove bounds on the trade-off between fairness and income~\cite{nanda2020balance,xu2020trade}. 
Additionally, past research into rideshare fairness has considered equalizing utility across riders and drivers~\cite{suhr2019two}. 

Recent work into the ride-pooling matching problem showed that using a Markov decision process (MDP) based approach, in combination with deep learning, maximizes the number of rider requests serviced because it makes non-myopic decisions~\cite{shah2019neural}. 
We build on prior ride-pooling work~\cite{lesmana2019balancing,shah2019neural} to develop a simple yet robust method to improve fairness non-myopically, which can be generalized to different notions of fairness. 
%Incorporating fairness into an MDP-based matching algorithm is difficult because prior work on fairness in bipartite matching cannot directly be applied.
Motivated by reports of wage inequality and rider discrimination~\cite{Brown18:Ridehail,cook2018gender,moody2019rider}, we develop two methods to reduce certain definitions of inequality within the MDP framework: (1) modifying the objective function of the MDP and (2) creating an income redistribution method.\footnote{Our code and data is publicly available at \url{https://github.com/naveenr414/ijcai-rideshare}}  
Our contributions are the following:
\begin{enumerate}
    \item We extend an MDP-based framework to non-myopically optimize for different definitions of fairness.
    \item We propose new objective functions that aim to maintain profitability while minimizing inequality on the driver and rider-sides, and evaluate their effect on inequality. 
    We show that certain objective functions can reduce inequality while maintaining or even improving profitability. 
    \item 
    We implement an income redistribution scheme, where each driver contributes a certain percentage of income earned, which is then redistributed to other drivers to offset income fluctuation and reduce wage inequality. 
    We show that, under certain levels of risk tolerance, we can use income redistribution to significantly reduce wage inequality while avoiding the free-rider problem.
    Additionally, we prove that our income redistribution scheme guarantees drivers a minimum wage. 
\end{enumerate}

We find that varying the objective function can positively impact rider-side fairness, and utilizing income redistribution can improve driver-side fairness.
When used together, these methods can improve both rider and driver-side fairness, and could potentially be used not only for ride-pooling, but also for other matching and resource allocation problems.  	
	\section{Related Work}
%\jpd{Make sure to give this a full look again before resubmitting -- quite a bit of recent work in fair rideshare/fair two-sided matching in the last year.}
The complexity of the ride-pooling matching problem led to the need for algorithmic solutions. 
One attempt to solve the problem reduces it to a version of the bipartite matching~\cite{zhao2019preference}.
Other papers model this problem as an MDP which takes into account future consequences of matching~\cite{lin2018efficient,li2019efficient}. 
Our work builds on previous work that uses offline-online learning and approximate dynamic programming to match drivers and riders non-myopically~\cite{shah2019neural}. 
% Using and has the advantage that matches are non-my Methods to solve the matching problem include bipartite matching ~\cite{zhao2019preference} and Markov Decision Process~\cite{lin2018efficient,li2019efficient}.
% To match non-myopically, offline deep learning learns the value of a state, which is used online to optimize matching in conjunction with approximate dynamic programming~\cite{shah2019neural}. 
% % To deal with unmatched drivers, there has been past work that details the actions that unmatched drivers should take~\cite{alonso2017demand}. 

%\njr{Come back to this later; make it flow better, and sound less like a list of papers}
While these algorithmic solutions serve more customers than traditional taxi services~\cite{uber2015chicago}, recent literature has raised questions about the fairness of the matches generated by these algorithms.  
On the rider side, Brown et al.~\cite{Brown18:Ridehail} highlight the disparate treatment of riders by ride-pooling companies, which results in higher rates of trip cancellation for black riders. 
Similarly, on the driver side, some ride-pooling drivers cannot achieve a living wage due to income inequality~\cite{graham2017towards}. 

One approach to dealing with these issues is to reformulate the problem using bipartite matching and utilize a min-max objective function to maximize the minimum utility for drivers and riders~\cite{lesmana2019balancing}. 
Within the sub-problem of rideshare matching, which is simpler because drivers cannot concurrently service multiple riders, past work has proven bounds on the trade-off between fairness and efficiency for a specific notion of fairness~\cite{nanda2020balance,xu2020trade,ma2020group}.
Past empirical research has looked into equalizing utility across drivers and riders~\cite{suhr2019two}, and improving the fairness of rideshare demand functions~\cite{yan2020fairness}. 

Our work builds on past ride-pooling work~\cite{shah2019neural,lesmana2019balancing} to develop a general method that can be adapted for many definitions of fairness, while also matching non-myopically by using an MDP.
We are the first study to look at fairness in MDP-based matching for ride-pooling or rideshare, and so our work applies to the broader class of non-bipartite matching problems. 
Additionally, we use income redistribution to reduce income fluctuation and inequality without affecting profitability, which has not been explored in previous work. 

	\section{Problem Statement}\label{sec:model}
In this section, we formally describe our problem,  methodology, and evaluation strategy.  
\subsection{Problem Description}
We consider the problem of matching a stream of rider requests to one of $n$ drivers. 
All riders and drivers reside on a graph, which consists of locations, $L$, and edges, $E$, where $E_{i,j}$ represents the travel time in minutes between location $i \in L$ and location $j \in L$. 
A trip between location $i$ and location $j$ is priced at $E_{i,j}+\delta$, where $\delta$ is a constant, capturing both the fixed and variable costs inherent in ride-pooling pricing.
Although the stream of rider requests is continuous, we batch requests and match once a minute, which emulates what ride-pooling companies perform in reality~\cite{matching}.
We define rider requests and driver states as follows: 
\begin{enumerate}
\item We define a rider request, which refers to a rider requesting a ride, to be the tuple  $u_i = (g_{i},e_{i},t_{i})$.
In the tuple, $g_{i}, e_{i} \in L$ are the starting and ending location for the rider request, and $t_{i}$ is the time when the rider request originated. 
\item We define the state of each of the $n$ drivers as $r_{i} = (m_{i},c_{i},d_{i},p_{i},s_{i})$, where $m_{i}$ is the capacity of driver $i$, and $c_{i}$ is the number of riders currently driven by driver $i$. 
$d_{i} \in L$ is the location of the driver, $p_{i}$ is the set of current requests that driver $i$ is servicing, and $s_{i}$ is the set of previously completed requests, making $|p_{i}| + |s_{i}|$ the total number of requests that will be serviced by driver $i$.
\end{enumerate}

\subsection{Matching Riders and Drivers}
We describe how we match $m$ rider requests, $U = u_{1} \cdots u_{m}$, with $n$ drivers, $R = r_{1} \cdots r_{n}$, using an MDP framework. 
Because each driver can be matched to multiple riders, we reduce the number of rider-driver combinations by generating feasible matchings for each driver. 
We let $F^{i}$ represent all feasible matchings for driver $i$, where each matching, $f \in F^{i}$ is a set of requests that can be served simultaneously by driver $i$~\cite{alonso2017demand}. % (Appendix A).
We let $a^{i,f}$ denote an indicator value, determining whether the set of requests $f$ is matched to driver $i$. 
Each driver is matched to one $f \in F^{i}$, which could be the empty set, where the driver gets no new rider requests. 

The problem reduces to selecting feasible matchings that maximize an objective function, subject to constraints, which we solve through an integer linear program. 
Let the objective function be $o(R,W)$, which measures the benefit of having the drivers be in the state $R$, where $W$ is the set of all previous unaccepted and accepted requests.
If, after accepting a matching $f \in F^{i}$, the approximate new state of the drivers is $R'$, then the change in objective function is 
\begin{equation}
    \Delta o(R,W,f) = o(R',W \cup f)-o(R,W).
\end{equation} 
We approximate the post-acceptance state, $R'$, from prior work~\cite{shah2019neural}. 
To avoid matching myopically, we compute the value function, $V(R')$ for the state of drivers after matching. 
We approximate the value function through deep learning following past work, where the value function is an approximation of the objective function, $o(R,W)$, over a future state, $R'$, essentially stating the value of a driver residing in a state $R'$~\cite{shah2019neural}. 
To perform deep learning, we utilize a neural network that takes as inputs, the locations of the drivers and their current path, and uses that to learn the value function. 
The details of training, network structure, and configuring the deep network can be found in our code, and prior work~\cite{shah2019neural}.
We then weight each feasible matching as $\Delta o(R,W,f) + \gamma V(R')$, where the discount factor $\gamma=0.9$. 
Our aim is to maximize:
\begin{equation}\label{eq:lp-obj}
\sum_{i=1}^{n} \sum_{f \in F^{i}}^{} a^{i,f} (\Delta o(R,W,f) + \gamma V(R')) 
\end{equation}
While $o$ could be non-linear, we pre-compute $\Delta o(R,W,f)$ to avoid non-linearities.
We use~\Eqref{eq:lp-obj} as the objective function in an integer linear program subject to the following constraints:  
\begin{enumerate} 
    \item \textbf{Driver-side feasibility}. One assigned action per vehicle: $\forall i, \sum_{f \in F^{i}}^{} a^{i,f} = 1.$
    \item \textbf{Rider-side feasibility}. At most one assigned action per request:
    $ \forall j, \sum_{i=1}^{N} \sum_{f \in F^{i}, u_{j} \in f}^{} a^{i,f} \leq 1.$
\end{enumerate} 

Through the integer linear program (ILP), we compute $a^{i,f}$, which determines which feasible matchings are selected. 
The ILP is run online using an offline-trained value function. 

\subsection{Evaluation Strategy}
To evaluate our methods, we utilize the New York City taxi data set~\cite{NYYellowTaxi2016}, a commonly used dataset~\cite{lesmana2019balancing,shah2019neural,alonso2017demand} which contains pickup and drop-off locations and times for taxi passengers from March 23rd to April 1st and from April 4th to April 8th, 2016. 
Data from ride-pooling companies are generally proprietary, so we use taxi data to simulate ride-pooling under the assumption that the spatial and temporal distribution of rider requests are similar. 
The pickup and drop-off location for each rider request is reported in longitude-latitude coordinates; we discretize these into $|L|$ locations in New York City and compute travel time and paths between each pair of locations.  
The driver locations are initially randomized, but remain consistent between experiments and trials.
The data from both experiments (Sections~\ref{sec:fairness} and~\ref{sec:income}) are available online.\footnote{https://www.dropbox.com/s/y9pkmzmbjclrfrx/test-data.zip?dl=0}

	\section{Fairness-based Objective Functions}\label{sec:fairness}
We develop fairness based objective functions to improve both driver-side and rider-side fairness. 
In Section~\ref{sec:fairness-profit-metrics}, we discuss profitability metrics that prior work has optimized for. 
In Sections~\ref{sec:fairness-rider-metrics} and~\ref{sec:fairness-driver-metrics}, we define our notions of driver and rider-side fairness, and then operationalize each of these by developing two objective functions. 
In Section~\ref{sec:fairness-exp-setup} we discuss how we test each objective function, and in Sections~\ref{sec:fairness-exp-results} and~\ref{sec:fairness-discussion}, we discuss the results after running the experiments. 

\subsection{Profitability Metrics}\label{sec:fairness-profit-metrics}
We develop two different measures of profitability: the number of riders serviced and the total income accumulated by drivers. 
To define these two metrics, if the driver states are $R = r_{1} \cdots r_{n}$, where $r_{i} = (m_{i},c_{i},d_{i},p_{i},s_{i})$, then the total number of rides serviced by driver $i$ is $|p_{i}| + |s_{i}|$, which represents ongoing requests ($p_{i})$, and completed requests ($s_{i}$). 
We develop an objective function based on the total number of rider requests serviced by all drivers: \begin{equation} 
    o_{1}(R,W) = \sum_{i=1}^{n} |p_{i}| + |s_{i}|.
\end{equation}
Similarly, the income for any request $u=(g,e,t)$ is $E_{g,e} + \delta$. 
So, if we let $\pi_{i}$ denote the income for driver $i$, where 
\begin{equation} 
\pi_{i} = \sum_{u=(g,e,t) \in p_{i} \cup s_{i}} E_{g,e}+\delta,
\end{equation} then our income objective function is the total income: 
\begin{equation} 
    o_2(R,W) = \sum_{i=1}^{n} \pi_{i} .
\end{equation} 
The state-of-the-art method~
\cite{shah2019neural} matches based on maximizing the number of requests. 

\subsection{Rider-Side Fairness Metrics}\label{sec:fairness-rider-metrics}
We define fairness for both drivers and riders.  
We note that, as with all operationalizations of societally-relevant concepts such as fairness, the decision of \emph{which} definition of fairness to use---if one is appropriate to use at all---is morally-laden, and one that should be made with the explicit input of stakeholders.  
Our definitions are drawn in part from recent reports of (primarily rider-side) unfairness in fielded ride-pooling applications~\cite{Dillahunt17:Uncoering,Brown18:Ridehail,Pandey20:Iterative}; still, we acknowledge that these need not be one-size-fits-all formalizations of a complicated concept, and view them rather as illustrative examples of a class of fairness definitions that could be incorporated into automated matching algorithms used by ride-pooling platforms.

We base our conception of rider-side fairness on the differential treatment of riders based on membership in a protected group, such as race~\cite{Brown18:Ridehail}. 
Because we lack access to any protected information in our data set, we instead measure fairness based on the distribution of neighborhoods serviced, though the method could be applied to any protected group. 

We define a neighborhood as a set of locations in $L$, and so for any $l \in L$, we define the neighborhood function, $1 \leq N(l) \leq H$, as the function which maps locations to neighborhood labels. 
The neighborhood function is constructed through the $k$-means algorithm, which divides the $|L|$ locations into $H$ neighborhoods based on their latitude and longitude. 
To determine the distribution of neighborhoods serviced, we define $h_{j}$ as the number of requests serviced starting in neighborhood $j$, where $h_{j}$ is 
\begin{equation} 
    h_{j} = \sum_{i=1}^{n} \sum_{u=(g,e,t) \in (p_{i} \cup s_{i}), N(g)=j} 1,
\end{equation} 
The total number of requests, serviced or non-serviced, originating from neighborhood $j$ is $k_{j} \geq h_{j}$ defined as
\begin{equation}
    k_{j} = \sum_{u=(g,e,t) \in W, N(g)=j} 1. 
\end{equation}
Our aim is to equalize treatment across neighborhoods, and so we aim to equalize the percent of requests serviced, $\frac{h_{j}}{k_{j}}$, which we call the success rate. 
To avoid making all success rates 0, we include a profitability term, and regulate the size of the two terms with a hyperparameter $\lambda$. 
This makes our objective function for rider-side fairness: 
\begin{equation} 
o_3(R,W) = -\lambda\mathrm{Var}(\frac{h_{j}}{k_{j}}) + \sum_{i=1}^{n} \pi_{i}, 
\end{equation} where $\mathrm{Var}$ is the variance function. 
We develop two metrics of rider-side fairness which capture the spread and scale of disparate treatment: $\mathrm{min} (\frac{h_{j}}{k_{j}})$ and $\mathrm{Var}(\frac{h_{j}}{k_{j}})$ respectively. 

\subsection{Driver-Side Fairness Metrics}\label{sec:fairness-driver-metrics}
We develop corresponding objective functions and metrics on the driver-side. 
Recent news has noted wage discrepancies among drivers~\cite{graham2017towards,bokanyi2019ride}, so we aim to reduce income disparity. Similar to the rider-side, our objective function minimizes the spread of income while maintaining profitability, which is represented as \begin{equation} 
o_4(R,W) = - \lambda \mathrm{Var}(\pi_{i}) + \sum_{i=1}^{n} \pi_{i}.
\end{equation}
We measure the scale and spread of driver-side inequality through two metrics. 
The first measures the scale of income disparity by calculating the minimum income for any driver, which is $\mathrm{min}(\pi_{i})$, and the second captures the spread using the variance function, which is $\mathrm{Var}(\pi_{i})$

\subsection{Experiment Setup}\label{sec:fairness-exp-setup}
We compare the four objective functions, which we call request, income, rider-side fairness, and driver-side fairness. 
We utilize each objective function with the matching algorithm, train the corresponding value function, and test each objective function on data from April 4th.
The details of hyperparameters and value function training are in Appendix B. 

\begin{figure*}[t!]
\centering
    \includegraphics[width=0.49\linewidth]{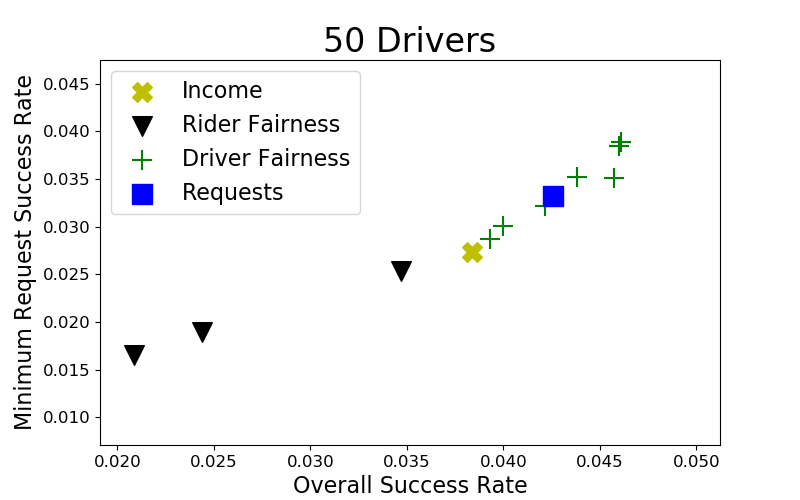}
    \includegraphics[width=0.49\linewidth]{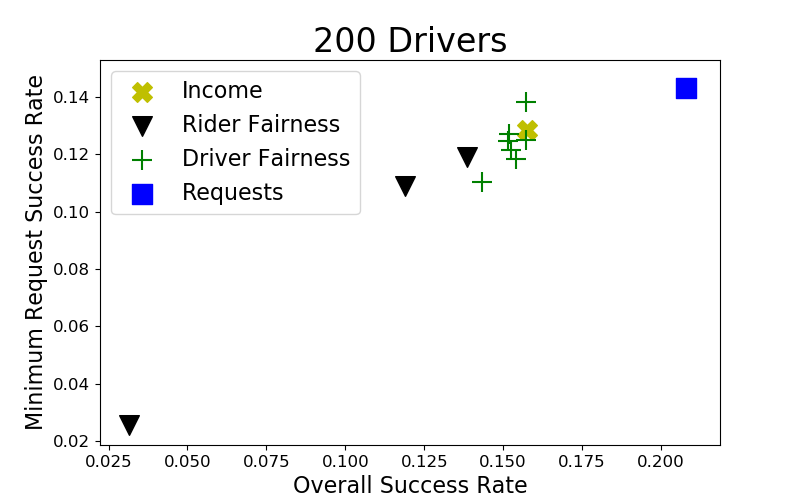}
    
    \caption{Each point represents one combination of hyperparameter and objective function. 
    We find that with $50$ total drivers, the objective function that minimizes the spread of income also achieves the highest request success rate both in the worst-off neighborhood and overall, outperforming the state-of-the-art objective function.   
    However, at $200$ drivers, the objective function which maximizes the number of requests serviced achieves the highest success rate overall and in the worst-off neighborhood.
    The success rates are small because the number of drivers is much smaller than the number of riders; as the number of drivers increases, the success rate will approach 1. 
    }
    \label{fig:service}
\end{figure*}
 
\begin{figure}[h!]
    \centering
\includegraphics[width=0.97\linewidth]{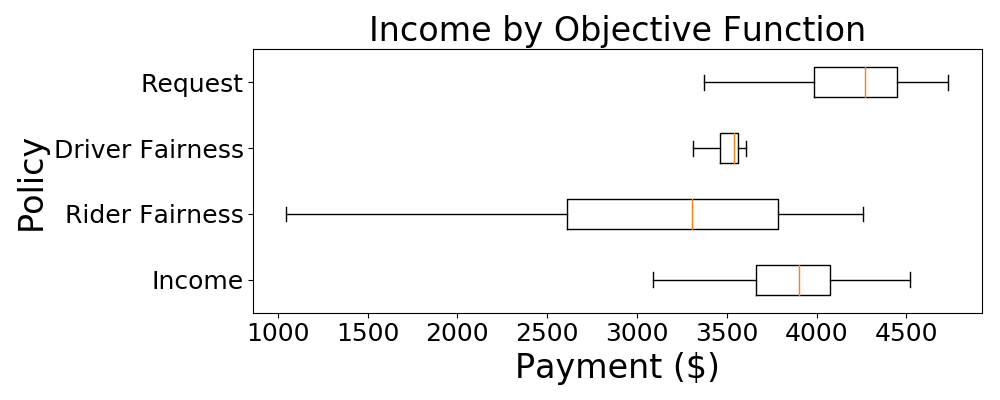}
    \caption{We compare the distribution of income for different objective functions, choosing a fixed value of $\lambda$
    %(Appendix B)
    , and set the number of drivers to $200$.  
    We find that optimizing for driver-side fairness reduces the income spread, however, at a cost to income for all drivers, as the income distribution shifts downwards. 
    Optimizing for rider-side fairness does no better, as the whole distribution again shifts downwards compared to the state-of-the-art method.}
    \label{fig:approach}
\end{figure}

\subsection{Experiment Results}\label{sec:fairness-exp-results}
We run experiments comparing the different objective functions on both profitability and fairness metrics. 
We describe the major conclusions from our experiments below: 
\begin{enumerate}
    \item \textbf{Alignment of fairness and profitability metrics.} We find that certain fairness metrics are aligned with profitability metrics. 
    In particular, objective functions that improve the overall success rate for riders also improve the success rate in the worst-off neighborhood (Figure~\ref{fig:service}). 
    Similarly, the objective function which maximizes total income at $200$ drivers, the request objective function, also maximizes income for those earning lower wages. 
    However, efforts to decrease the spread of income do not result in higher wages for those at the bottom, and instead lower wages across the board (Figure~\ref{fig:approach}). 
    \item \textbf{Driver-side fairness objective functions improve rider-side metrics.} We find that using an objective function that maximizes driver-side fairness manages to reduce the spread of income and also positively impacts rider-side inequality and profitability.
    In particular, at $50$ drivers, optimizing for driver-side fairness improves the overall success rate and the success rate in the worst-off neighborhood, but at $200$ drivers, the state-of-the-art objective function, which optimizes for the number of requests, maximizes both rider-side fairness and profitability metrics. 
    This is probably because, when optimizing for driver-side fairness, drivers tend to service lower value and shorter-distance rides to reduce the spread of income, allowing for more rides to get serviced. 
    However, when there are too many drivers, there are not sufficient short-distance requests. 
    \item \textbf{Inability to raise wages for lowest wage earners.} While objective functions that maximize driver-side fairness reduce the spread of income, they do so at a cost to profitability by lowering income across the board. 
    % When compared to the state-of-the-art method, which optimizes for the number of requests serviced, objective functions that lower the spread of income have lower incomes for all drivers, no matter the value of the hyperparameter used. 
    This is a general problem, as no objective function was able to raise wages for the lowest-earning drivers when compared to the state-of-the-art objective function. 
\end{enumerate}

\subsection{Discussion}\label{sec:fairness-discussion}
Our experiments show that, by changing the objective function used in matching, it is possible to improve rider-side metrics. 
Additionally, we find that certain fairness metrics, such as the success rate (fraction of requests serviced) in the worst-off neighborhood, are aligned with profitability metrics. 
On the other hand, metrics that measure spread, such as the spread of income, are not aligned with profitability metrics.
To tackle this issue, we introduce an income redistribution method (section 5) which tackles driver-side inequality without affecting profitability. 
The results from these experiments can be incorporated into other matching algorithms to improve fairness.
Additionally, our methods can be extended to other definitions of fairness by varying the objective function, and so are robust to different definitions of fairness. 

	\section{Income Redistribution}\label{sec:income}
To reduce the spread and fluctuation of income, we propose an income redistribution method, where each driver takes home a certain percentage of their income, and the rest is redistributed at the end of each day.
In Section~\ref{sec:income-model}, we formalize our income redistribution model, and we describe properties of the model in Section~\ref{sec:income-redistribution}. 
In Section~\ref{sec:income-exp-setup}, we outline experiments to test out income redistribution method, and we detail the results in Section~\ref{sec:income-exp-results}. 
\begin{figure*}[t!]
\includegraphics[width=0.4\linewidth]{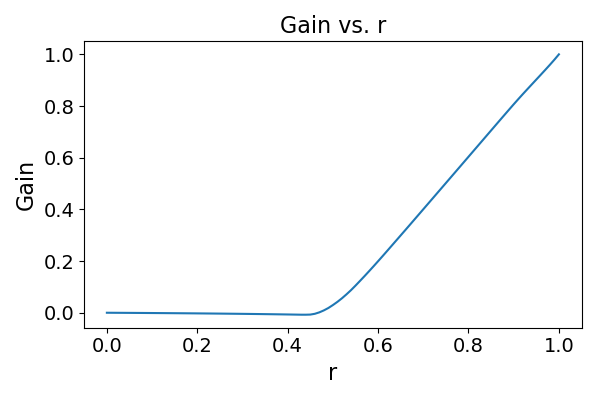}
\includegraphics[width=0.4\linewidth]{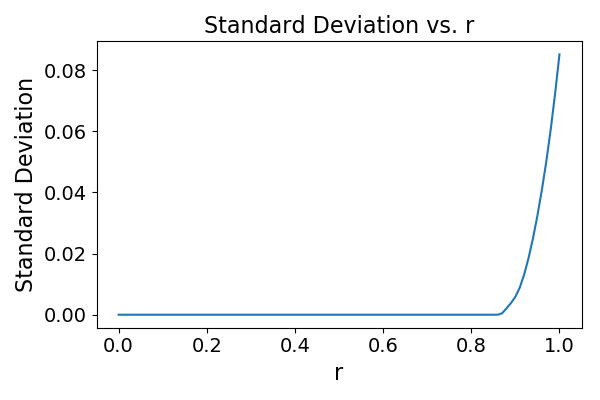}
\centering
\caption{Comparing the gain (\emph{left}) and standard deviation (\emph{right}) of income to value ratio for different values of $r$.  We find that when $0.5 \leq r \leq 0.9$, the gain is non-zero, while the standard deviation is small and non-increasing, meaning that drivers are incentivized to earn more without negatively affecting the income inequality.}%
\label{fig:shapley}
\end{figure*}

\subsection{Model of Income Redistribution}\label{sec:income-model}
Driver wages fluctuate day to day due to varying demand, making it difficult for drivers to earn a reliable wage. 
To counter this and reduce income inequality, we design an income redistribution scheme. 
To do this, we define the amount that drivers make before redistribution as $\pi_{i}, 1 \leq i \leq n$, and the amount they make after redistribution as $q_{i}$, which is dependent on the risk tolerance, $r$.  

More valuable drivers should have a higher $q_{i}$, as they might work longer hours or service higher need areas. 
To generate an objective measure of the value of a driver, we utilize a metric from game theory known as the Shapley value~\cite{Shapley53:Value}, which measures the marginal contribution of an agent. 
The Shapley value helps assess the true contributions of each driver, and was previously used to 
%find influential nodes in social networks~\cite{narayanam2010shapley}, 
determine the value of data points~\cite{jia2019towards} and decide how much ride-pooling riders should pay~\cite{amano2020fair}.
In our context, we can formally define the Shapley value, $v_{i}$, as the marginal contribution of driver $i$ to each subset of drivers; that is, if we let $D=\{1, 2, \ldots, n\}$, and $\pi(s)$ be the income that a subset $s \in D$ would have made under the matching algorithm, for just those $s$ drivers. Then 
\begin{equation} 
    v_{i} = \sum_{s \subset D \setminus i} \frac{|s|! (n-|s|-1)!}{n!} (\pi(s \cup i)-\pi(s)).
\end{equation}
\begin{example}\label{ex:shapley}
Consider three drivers, and two riders, with prices \$10 and \$5. 
Suppose drivers 1 and 2 can service the first request, and drivers 2 and 3 can service the second request, where each driver can service at most one request. 
This results in the following profits for each subsets of drivers:

\begin{center}
    \begin{tabular}{ll}
    Driver Subset & Total Income \\
    \{\}          & 0            \\
    \{1\}         & 10           \\
    \{2\}         & 10           \\
    \{3\}         & 5            \\
    \{1,2\}       & 15           \\
    \{1,3\}       & 15           \\
    \{2,3\}       & 15           \\
    \{1,2,3\}     & 15          
    \end{tabular}
\end{center}

To compute the Shapley value of driver 1, consider the 4 subsets excluding driver 1: \{\}, \{2\}, \{3\}, \{2,3\}, which make 0, 10, 5, and 15 respectively.
When adding driver 1 to each of these subsets, the total income increases by 10, 5, 10, and 0 respectively. 
We then average over these subsets, proportional to the number of permutations for each subset, to get \begin{equation}
    \frac{2!}{3!} \times 10 + \frac{1!}{3!} \times 5 + \frac{1!}{3!} \times 5 + \frac{2!}{3!} \times 0 = 5,
\end{equation} which is the Shapley value of the driver. 

\end{example}

Due to Shapley value properties, $\sum_{i=1}^{n} v_{i} = \sum_{i=1}^{n} \pi_{i}$, or total value equals total income.  

Calculating the Shapley value requires enumerating all $O(2^{n})$ subsets of drivers. 
To reduce this, we approximate $v_{i}$ in $O(n)$ evaluations through a Monte Carlo simulation~\cite{ghorbani2019data}. 
Using the Shapley value of a driver, we redistribute income to reduce the difference between a driver's pre-redistribution income, $\pi_{i}$, and their value, $v_{i}$. 
The redistribution amount depends on a risk parameter, $0 \leq r \leq 1$, which designates what fraction of their income is kept by drivers. 
We collect $\sum_{i=1}^{n} (1-r) \pi_{i}$ from all drivers, and redistribute it proportional to the difference between their value and earnings, which is $\mathrm{max}(0,v_{i}-r\pi_{i})$. 
After redistribution, each driver earns $q_{i}$, defined by
\begin{equation}
    q_{i} = rv_{i} + \frac{\max (0,v_{i}-r\pi_{i})}{\sum_{j=1}^{n} \max (0,v_{j}-r\pi_{j})} \sum_{j=1}^{n} (1-r) v_{j}.
\end{equation}
This essentially allows drivers to keep some income, and redistributes the rest proportional to the difference between earnings and value, which allows drivers who earned less than their value to recoup some of their losses. 

\subsection{Redistribution Properties}\label{sec:income-redistribution}

Our approach to income redistribution begets two attractive theoretical properties. %, which we discuss here; proofs are given in Appendix~\ref{app:theory}.  
\begin{restatable}{thm}{thmmax}\label{thm:max}
When $\pi_{i}$, $\sum_{j=1}^{n} \pi_{j}$, and $r$ are constant, $q_{i}$ is maximized when $v_{i}$ is maximized. 
\end{restatable}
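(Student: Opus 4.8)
The plan is to put the closed form for $q_i$ into the shape ``$q_i = (\text{something increasing in } v_i) + (\text{nonnegative constant})\cdot(\text{something increasing in } v_i)$'', so that monotonicity in $v_i$ is immediate and the conditioning in the hypothesis ($\pi_i$, $\sum_j\pi_j$, $r$ all fixed) is exactly what is needed to make the ``constant'' factors constant.

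Concretely, the first step I would take is to extract the one consequence of the hypotheses that does the work: by the efficiency property of the Shapley value recorded above, $\sum_{j=1}^n v_j = \sum_{j=1}^n \pi_j$, so holding $\sum_{j=1}^n \pi_j$ fixed makes $C := \sum_{j=1}^n (1-r)v_j = (1-r)\sum_{j=1}^n \pi_j$ a fixed nonnegative number. Next I would abbreviate the numerator and denominator of the redistribution fraction by $x_i := \max(0, v_i - r\pi_i)$ and $B_i := \sum_{j\neq i}\max(0, v_j - r\pi_j) \ge 0$, so that the defining equation for $q_i$ reads
\begin{equation}
q_i = r v_i + C\,\frac{x_i}{x_i + B_i}.
\end{equation}
The core of the argument is then two elementary monotonicity observations: $v_i \mapsto r v_i$ is non-decreasing because $r$ and $\pi_i$ are fixed; and $v_i \mapsto x_i = \max(0, v_i - r\pi_i)$ is non-decreasing, while $t \mapsto t/(t+B_i)$ is non-decreasing on $[0,\infty)$ for any fixed $B_i\ge 0$, so the composition $v_i \mapsto x_i/(x_i+B_i)$ is non-decreasing and is scaled by the constant $C\ge 0$. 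Adding the two non-decreasing pieces shows $q_i$ is a non-decreasing function of $v_i$ (with all the other listed quantities fixed), hence it is maximized precisely when $v_i$ is as large as possible, which is the claim; the edge cases $r=1$ (then $C=0$ and $q_i=v_i$) and an empty denominator are trivial or ruled out.

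The step I expect to be the real obstacle is the status of $B_i$, since it is assembled from the other drivers' Shapley values $v_j$, and these are not free: the efficiency identity $\sum_j v_j = \sum_j \pi_j$ couples them to $v_i$. I would resolve this by treating $B_i$ as the parameter it effectively is in the redistribution rule — the pooled shortfall of the remaining $n-1$ drivers — so that the monotonicity above is understood as monotonicity in $v_i$ with $B_i$ held fixed. To reinforce that this is not an artifact, I would also note that in the natural ``fully funded'' regime where $v_j \ge r\pi_j$ for every $j$ one has $x_i + B_i = \sum_{j=1}^n (v_j - r\pi_j) = (1-r)\sum_{j=1}^n \pi_j = C$ identically, so the fraction collapses and $q_i = r v_i + (v_i - r\pi_i) = (1+r)v_i - r\pi_i$, which is transparently increasing in $v_i$ with no residual dependence on the other drivers at all.
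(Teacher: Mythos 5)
Your proof is correct and follows essentially the same route as the paper: split $q_i$ into $rv_i$ plus the redistribution share, use Shapley efficiency ($\sum_j v_j = \sum_j \pi_j$ fixed) to make the pooled amount $(1-r)\sum_j v_j$ constant, and observe both terms are nondecreasing in $v_i$. Your explicit handling of the denominator term $B_i$ and the edge cases is somewhat more careful than the paper's argument, but it is the same decomposition and the same monotonicity idea.
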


\begin{restatable}{thm}{thmmin}\label{thm:min}
For a given $v_{i}$, $r$, and fixed $\sum_{j=1}^{n} \pi_{j}$, the minimum value of $q_{i}$ is $\mathrm{min}(rv_{i},(1-r)v_{i})$.
\end{restatable}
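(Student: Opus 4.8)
The plan is to bound $q_i$ from below by a two‑case argument keyed on the sign of $v_i-r\pi_i$, i.e.\ on whether driver $i$ is a net recipient of the redistribution pool. Two facts already in the paper do the work. First, $q_i$ is the sum of the income driver $i$ retains (an $r$‑fraction of $\pi_i$) and a \emph{nonnegative} share of the pool, and by efficiency of the Shapley value the pool equals $(1-r)\sum_j\pi_j=(1-r)\sum_j v_j$. Second, each $v_i\ge 0$: since $\pi(\cdot)$ is monotone in the driver set, every marginal contribution $\pi(s\cup i)-\pi(s)$ — and hence $v_i$, a convex combination of them — is nonnegative.

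\textbf{First case: $v_i-r\pi_i\le 0$.} Then $\max(0,v_i-r\pi_i)=0$, so driver $i$ draws nothing from the pool and $q_i$ collapses to the retained term $r\pi_i$. Since $0<r\le 1$, the hypothesis $v_i\le r\pi_i$ forces $\pi_i\ge v_i$, so (recalling $v_i\ge 0$) $q_i=r\pi_i\ge r v_i\ge\min(rv_i,(1-r)v_i)$.

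\textbf{Second case: $v_i-r\pi_i>0$.} Here the key step is a lower bound on the redistributed share. Since $r\pi_j\ge 0$ and $v_j\ge 0$, we have $\max(0,v_j-r\pi_j)\le v_j$ for every $j$, hence the normalizer obeys $\sum_j\max(0,v_j-r\pi_j)\le\sum_j v_j$. Plugging this in and using $\sum_j(1-r)v_j=(1-r)\sum_j v_j$,
\[
q_i\;\ge\; r\pi_i+\frac{v_i-r\pi_i}{\sum_j v_j}\,(1-r)\sum_j v_j\;=\;r\pi_i+(1-r)(v_i-r\pi_i)\;=\;(1-r)v_i+r^2\pi_i\;\ge\;(1-r)v_i,
\]
which is again at least $\min(rv_i,(1-r)v_i)$. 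Combining the two cases establishes the claimed lower bound.

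To upgrade this to ``the minimum is exactly $\min(rv_i,(1-r)v_i)$'' I would then exhibit an admissible profile attaining it: when $r\ge\tfrac12$ the bound is $(1-r)v_i$, which is reached in the limit by sending $\pi_i\to 0$ and parking the fixed total $\sum_j\pi_j$ on drivers of negligible Shapley value, so that $\sum_j\max(0,v_j-r\pi_j)\to\sum_j v_j$ and the displayed inequality becomes an equality. I expect this realizability step to be the main obstacle: one must check the profile respects the constraints that $v_i$, $r$, and $\sum_j\pi_j$ are held fixed (together with Shapley efficiency and $\pi_j\ge 0$), and one should note that for $r<\tfrac12$ the second case already gives the stronger bound $q_i\ge(1-r)v_i>rv_i$, so in that regime the statement is best read as a guaranteed floor rather than an attained value. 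The inequality $\sum_j\max(0,v_j-r\pi_j)\le\sum_j v_j$ together with Shapley efficiency is the engine of the whole argument; the rest is bookkeeping.
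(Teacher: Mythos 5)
Your argument is correct and rests on the same engine as the paper's proof---the inequality $\sum_j \max(0,v_j-r\pi_j) \le \sum_j v_j$ (the paper's $d \le T$)---but you deploy it differently. The paper treats $q_i$ as a function of $\pi_i$, asserts it has no local minima on $[0,v_i]$, and then inspects the regimes $\pi_i \ge v_i$ (giving $rv_i$) and $\pi_i = 0$ (giving $(1-r)v_i$ via $d \le T$); you replace the unproven no-local-minima step with a uniform two-case algebraic bound keyed on the sign of $v_i - r\pi_i$, which is cleaner and yields the sharper estimate $q_i \ge (1-r)v_i + r^2\pi_i$ whenever driver $i$ is a net recipient. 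Two remarks. First, you read the retained term as $r\pi_i$, whereas the paper's displayed definition of $q_i$ has $rv_i$; the paper's own proof is inconsistent on this point (it silently drops the term at $\pi_i=0$, i.e.\ behaves as if the term were $r\pi_i$, matching the prose that drivers keep an $r$ fraction of their income), and the bound $\min(rv_i,(1-r)v_i)$ holds under either reading, so this does not affect correctness, but it is worth flagging. Second, your caveat about attainment is well taken: both your argument and the paper's establish only the floor $q_i \ge \min(rv_i,(1-r)v_i)$; for $r<\tfrac12$ the value $rv_i$ is never attained (the argument actually gives $q_i \ge (1-r)v_i$ there), and the limiting profile you sketch for $r \ge \tfrac12$ (sending $\pi_i \to 0$ and concentrating the fixed total income on drivers of negligible Shapley value so that the normalizer approaches $\sum_j v_j$) is exactly what would be needed for tightness---a step the paper does not carry out either. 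Your explicit use of $v_j \ge 0$ (via monotonicity of $\pi(\cdot)$) is also a needed hypothesis for $d \le T$ that the paper leaves implicit.
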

Theorem~\ref{thm:max} incentivizes drivers to maximize their value, meaning that drivers are not better off putting in less effort under income redistribution.  Theorem~\ref{thm:min} gives a guarantee on the minimum amount that drivers can make, which is helpful in light of the recent debates over minimum wage for ride-pooling drivers~\cite{keeton2016uber}.  

\subsection{Experiments}\label{sec:income-exp-setup}
To test the effect of income redistribution, we vary the risk parameter and determine its effect on the income distribution.
We run experiments using a variety of objective functions and number of drivers, though we focus on $200$ drivers with the requests objective function when describing results. 
While we previously proved that drivers will aim to maximize $v_{i}$, making $v_{i}$ positively correlated with $q_{i}$, we did not specify the magnitude of this correlation.
To quantify the correlation, we define the gain metric, $g_{i}$, as the ratio of change in $q_{i}$ to $v_{i}$ when $v_{i}$ is doubled.
Because higher $v_{i}$ leads to higher $q_{i}$, we know $g_{i} \geq 0$, and ideally, if $v_{i}$ for a driver is doubled, then so would $q_{i}$, meaning $g_{i} = 1$. 
We calculate $g=\frac{1}{n} \sum_{i=1}^{n} g_{i}$ as the average gain metric over all drivers.

To determine the fairness of the income redistribution algorithm, we look at the distribution of the ratio of $v_{i}$ to $q_{i}$. 
Ideally, each driver would earn their value, and so the spread of the distribution would be 0.
However, as risk tolerance increases, less money will be redistributed, and so the spread will increase. 
To quantify this, we compute $\frac{q_{i}}{v_{i}}$ for all $i$, and then compute the standard deviation.

\subsection{Experiment Results}\label{sec:income-exp-results}
We outline some of the findings from our experiments: 
\begin{enumerate}
    % \item  - As expected, increasing $r$ leads to a higher risk for drivers, increasing their gain metric, while also increasing the spread of how driver earnings differ from their value (Figure~\ref{fig:shapley}).
    % This is because increasing $r$ values allows drivers to keep a higher percentage of their income, but also allows for less income redistribution.
    \item \textbf{Ability to fairly redistribute.} We find that, for $0.5 \leq r \leq 0.9$, we can keep the gain larger than 0, while keeping the spread of income to value ratio near 0 (Figure~\ref{fig:shapley}). 
    In practice, this means when $r=0.9$, which corresponds to $g=0.8$, drivers receive, on average, an 80\% raise in income if they doubled their value. 
    At the same time, the standard deviation of $\frac{q_{i}}{v_{i}}$ is low, and so most drivers earn close to their value.  
    This allows for an equitable distribution of income while avoiding the free-rider problem, where income is spread out despite all the value being concentrated in one driver.
    
    \item \textbf{Tightness of Theoretical Bounds.} While the bounds in section~\ref{sec:income-redistribution} give results on the minimum amount that drivers are guaranteed to make, in reality, the worst-off drivers end up making significantly more than that. %(Figure~\ref{fig:bounds}).
    For all values of $r$, the worst-off driver made at least \$2500, while the theoretical bounds peaked at \$1500. 
    The bounds assume the worst-case situation where one driver gets all the pre-redistribution income, however in practice, this rarely occurs, and so the bounds are not tight. 
\end{enumerate}

	\section{Conclusions \& Future Research}\label{sec:conclusions}
With an increasing ubiquity of ride-pooling services comes the need to critically evaluate fairness for both sides of the market -- riders and drivers alike. 
By modifying state-of-the-art ride-pooling matching algorithms, we explored objective functions that increase the number of requests serviced for certain numbers of drivers, and also increased fairness across groups. 
We additionally proposed addressing income inequality by using income redistribution to allocate income and alleviate fluctuation in income between drivers.  
By using income redistribution, we found that it is possible to avoid free-riding while keeping income inequality low. 

% Future work from here down
We discuss three potential avenues for future work:
\begin{enumerate}
    \item By evaluating methods on other data sets, we can test the generalizability of our methods and determine the circumstances under which the methods perform optimally.
    \item By developing provable guarantees for objective functions, we increase the robustness of our objective functions and guarantee that these objective functions work under different circumstances.
    \item By exploring income redistribution in other resource allocation settings, we can tackle the fairness versus profitability trade-off for other problems.
\end{enumerate}

% Don't delete, please!
\section*{Acknowledgements}
NSF CAREER IIS-1846237, NIST MSE \#20126334, DARPA \#HR00112020007, DARPA \#S4761, and DoD WHS \#HQ003420F0035.
	\clearpage
	\pagebreak
	 \section*{Ethical Impact}
We address a specific application of the classic fairness-efficiency trade-off found in numerous economic systems.
%As discussed in section 4, our method relies on a \emph{quantitative definition} of fairness.  
While we did derive our proposed definitions of fairness from reports of inequality in ride-pooling systems~\cite{Dillahunt17:Uncoering,Brown18:Ridehail,Pandey20:Iterative}, we acknowledge that %this was a prescriptive decision and, as technicians rather than domain experts in the greater rideshare space, that decision
may not be the proper solution for all settings where ride-pooling is deployed.  
Indeed, measuring, defining, and incorporating definitions of fairness into automated systems is an area of extremely active research within the FATE community. %---and one that is, rightfully, increasingly involving cross-talk between communities outside of AI and machine learning.  
%We view the potential ethical impact of our work as largely positive. 
We confidently make the prescriptive statement that considerations of fairness at \emph{both} the driver and rider level \emph{should} be taken into account in some way in ride-pooling platforms.
%That said, as techniques such as those presented in our paper move toward deployment in real rideshare platforms,
However, open dialogue with stakeholders---e.g., local governments, riders and drivers hailing from various backgrounds---is imperative to understanding the desires of those stakeholders.%~\cite{Holstein19:Improving}.

	{\small
    \bibliographystyle{named}
    \bibliography{references/references}
	}

	\clearpage
	\pagebreak
	\clearpage
\appendix

\section{Matching Details}
We generate feasible combinations where the pickup delay is less than 300 seconds and the drop-off delay is less than 60 seconds, which follows from previous work~\cite{alonso2017demand}.

\section{Hyperparameters and Value Function Training}
We compare based on both the profitability metrics and the fairness metrics, while varying the total number of riders at full, half, and quarter demand, and the number of drivers at $10$, $50$, $100$, and $200$. 
We test each of the fairness objective functions with a variety of hyperparameters, $\lambda$, and set $\delta=5$, $|L|=4461$, $m_{i}=4$, and $H=10$. 
We train using data from the week of March 26th, and we list hyperparameters for each objective function below: 
\begin{enumerate}
    \item \textbf{Request}: We run the request objective function by training for 3 days and testing for 1 day.  
    \item \textbf{Driver-side fairness}: We run the driver-side fairness objective function by training for 3 days and testing for 1 day. We use $\lambda=\{0,\frac{1}{6},\frac{2}{6},\frac{3}{6},\frac{4}{6},\frac{5}{6},\frac{6}{6}\}$
    \item \textbf{Rider-side fairness}: We run the rider-side fairness objective function by training for 2 days and testing for 1 day. 
    We use $\lambda=\{10^{8},10^{9},10^{10}\}$
    \item \textbf{Income}: We run the income objective function by training for 3 days and testing for 1 day.  
\end{enumerate}

For both the objective function and redistribution experiments, we used a Red Hat Enterprise 7.9 server with 16 CPUs on an Intel Xeon X5550 processor, with 36 GB of RAM shared across multiple users.
Each simulation took no longer than 1 day, and many ran within a couple of hours. 
\\
For figure 2, we set $\lambda=\frac{4}{6}$ for driver-side fairness, and $\lambda=10^{9}$ for rider-side fairness. 

% \section{Additional Objective Function Experiment Information}
% \label{sec:experiments}
% We find that as the number of drivers increases, the request objective function achieves more profit. 
% We plot the distribution of income for each of the 4 levels of drivers with downsampling at $1$. 
% \begin{figure*}
%     \includegraphics[width=250pt]{}
%     \includegraphics[width=250pt]{}
%     \includegraphics[width=250pt]{}
%     \includegraphics[width=250pt]{}
%     \caption{Plots of income distribution with varying numbers of drivers}
%     \label{fig:my_label}
% \end{figure*}

% We also note that the spread of income decreases with increasing lambda when using the Income Variance policy. 
% Also note that the spread of the Request policy increases as the number of drivers increases. 

% We additionally plot the income distribution by changing the downsampling rate. 
% We find that as the downsampling rate increases, there is more profit due to more riders being available for drivers. 
% The plots use $100$ drivers and vary the downsampling rate. 
% \begin{figure*}
%     \includegraphics[width=250pt]{}
%     \includegraphics[width=250pt]{}
%     \includegraphics[width=250pt]{figures/100_1_driver.png}
%     \caption{Plots of income distribution with varying amount of downsampling}
%     \label{fig:my_label}
% \end{figure*}

\section{Theoretical Guarantees}\label{app:theory}
In this section, we provide supporting proofs for our theoretical guarantees detailed in the income redistribution section. 
First, we recall Theorem~\ref{thm:max}, and provide its proof now.
\thmmax*
\begin{proof}
For constant $\pi_{i}$ and $r$, this is the same as maximizing 
\begin{equation}
    rv_{i} + \frac{\max (0,v_{i}-r\pi_{i})}{\sum_{j=1}^{n} \max (0,v_{j}-r\pi_{j})} \sum_{j=1}^{n} (1-r) v_{j}
\end{equation}
As $v_{i}$ increases, $rv_{i}$ also increases.
Similarly, increasing $v_{i}$ increases $\frac{\mathrm{min}(0,v_{i}-ra_{i})}{\sum_{i=1}^{n} \mathrm{min}(0,v_{i}-ra_{i})}$ and brings it closer to 1. 
Because $\pi_{i}$ is constant, then $\sum_{j=1}^{n} \pi_{j} = \sum_{j=1}^{n} v_{j}$ is also constant. 
Therefore, maximizing both the first and second terms is done through maximizing $v_{i}$, and so in order to maximize $q_{i}$, we need to maximize $v_{i}$. 
\end{proof}

Next, we recall Theorem~\ref{thm:min}, and provide its proof below.
\thmmin*
\begin{proof}
Consider $q_{i}$ as a function of $\pi_{i}$. 
Define $d=\sum_{i=1}^{n} \mathrm{max}(0,v_{i}-r\pi_{i})$, and $T=\sum_{i=1}^{n} v_{i}$, $d \leq T$. 
We rewrite $q_{i}$ as 
\begin{equation}
    rv_{i} + \frac{\mathrm{max}(0,v_{i}-ra_{i})}{d+\mathrm{max}(0,v_{i}-ra_{i})} * T*(1-r)
\end{equation}
This function has no local minima when $0 \leq \pi_{i} \leq v_{i}$, and when $\pi_{i} \geq v_{i}$, $q_{i} \geq rv_{i}$. 
At $\pi_{i}=0$, $q_{i} = \frac{v_{i}}{d} \times (1-r)T \geq \frac{v_{i}}{T} \times (1-r)T = (1-r)v_{i}$. 
Therefore, the minimum of $q_{i}$ is the smaller of these two, which is $\mathrm{min}(rv_{i},(1-r)v_{i})$.
\end{proof}

\section{Additional Experimental Results}
\begin{figure}[h!]
\includegraphics[width=\linewidth]{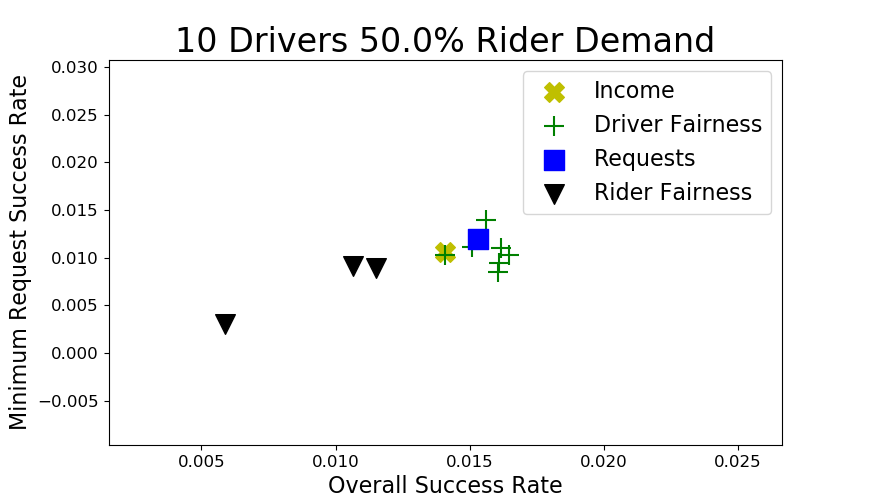}
\caption{We compare the overall success rate and the success rate in the worst-off neighborhood for 10 drivers when rider demand is 50\% of normal. We find that both requests and driver-fairness objective functions perform optimally.}%
\label{fig:10_5}
\end{figure}

\begin{figure}[h!]
\includegraphics[width=\linewidth]{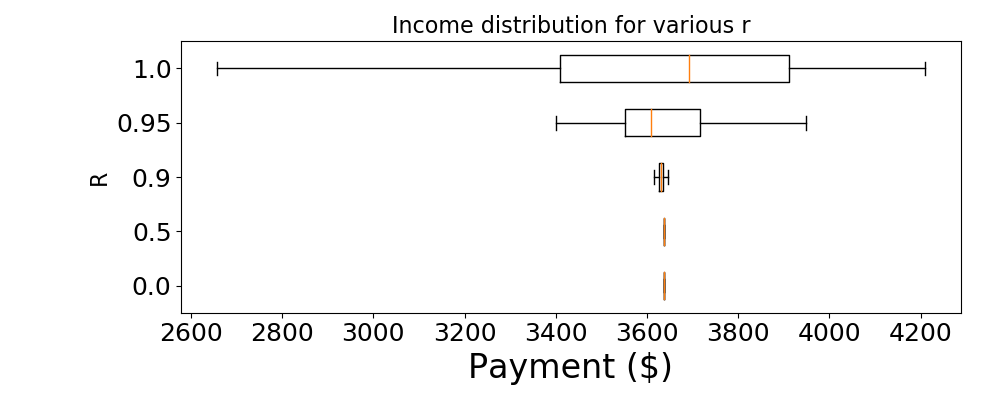}
\caption{We ran an experiment where $v_{i}$ was uniform for all drivers, with $n=200$ using the requests objective function. Using this definition, even for $r=0.9$, the spread of income is very small, and so is another way to increase earnings for those with lower wages. }%
\label{fig:distro_uniform}
\end{figure}

\begin{figure}[h!]
\includegraphics[width=\linewidth]{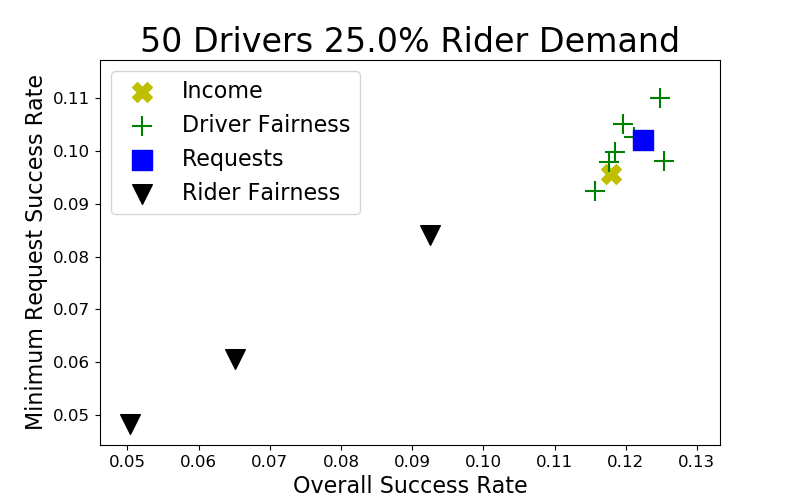}
\caption{We compare the overall success rate and the success rate in the worst-off neighborhood for 50 drivers when rider demand is 25\% of normal. We find that driver-fairness objective functions perform optimally, though the gap between the driver-fairness and requests objective functions tightens, when compared with figure \ref{fig:service}. This is because driver-fairness policies perform better when the number of drivers is much larger than the number of riders, and so when the number of riders decreases, the driver-fairness objective function does worse relative to the requests objective function.}%
\label{fig:50_25}
\end{figure}

\begin{figure}[h!]
\includegraphics[width=\linewidth]{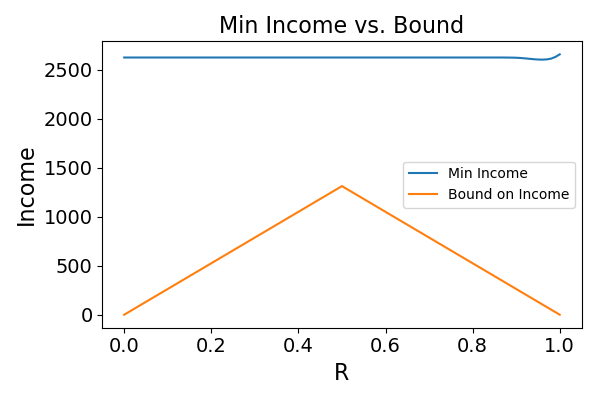}
\caption{We compare the theoretical bound on the minimum wage against the lowest income for any driver, at $n=200$ using the requests objective function. We find that the bound is very loose, as the worst-off driver makes nearly double the theoretical bound. }%
\label{fig:bounds}
\end{figure}

\begin{figure}[h!]
\includegraphics[width=\linewidth]{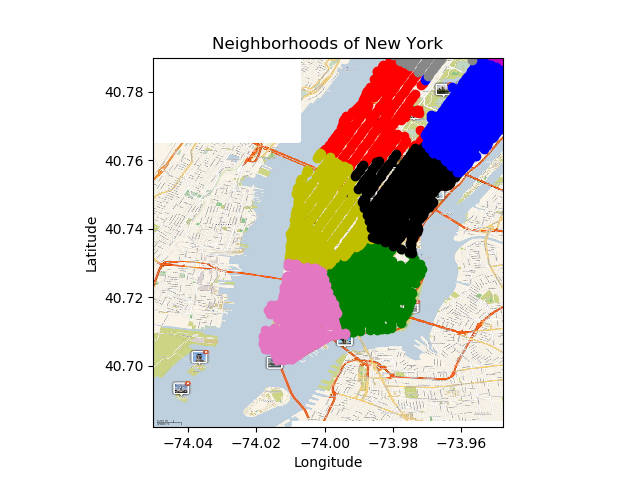}
\caption{We show a map of New York City, with each neighborhood in a different color (note that all 10 neighborhoods are not shown). }%
\label{fig:neighborhoods}
\end{figure}

\end{document}